\theoremstyle{plain}
\newtheorem{theorem}{Theorem}[section]
\newtheorem{proposition}[theorem]{Proposition}
\newtheorem{lemma}[theorem]{Lemma}
\newtheorem{corollary}[theorem]{Corollary}
\theoremstyle{definition}
\newtheorem{definition}[theorem]{Definition}
\newtheorem{assumption}[theorem]{Assumption}
\theoremstyle{remark}
\newtheorem{remark}[theorem]{Remark}
\icmltitlerunning{Adapprox: Adaptive Approximation in Adam Optimization via Randomized Low-Rank Matrices}
\begin{document}

\twocolumn[
\icmltitle{Adapprox: Adaptive Approximation in Adam Optimization via Randomized Low-Rank Matrices}

% It is OKAY to include author information, even for blind
% submissions: the style file will automatically remove it for you
% unless you've provided the [accepted] option to the icml2024
% package.

% List of affiliations: The first argument should be a (short)
% identifier you will use later to specify author affiliations
% Academic affiliations should list Department, University, City, Region, Country
% Industry affiliations should list Company, City, Region, Country

% You can specify symbols, otherwise they are numbered in order.
% Ideally, you should not use this facility. Affiliations will be numbered
% in order of appearance and this is the preferred way.
% \icmlsetsymbol{equal}{*}

\begin{icmlauthorlist}
\icmlauthor{Pengxiang Zhao}{hku}
\icmlauthor{Ping Li}{hw}
\icmlauthor{Yingjie Gu}{hw}
\icmlauthor{Yi Zheng}{hw}
\icmlauthor{Stephan Ludger Kölker}{hw}
\icmlauthor{Zhefeng Wang}{hw}
\icmlauthor{Xiaoming Yuan}{hku}
\end{icmlauthorlist}

\icmlaffiliation{hku}{Department of Mathematics, The University of Hong Kong, Hong Kong, China}
\icmlaffiliation{hw}{System AI Innovation Lab, Huawei Cloud, Hangzhou, China}

\icmlcorrespondingauthor{Xiaoming Yuan}{xmyuan@hku.hk}

% You may provide any keywords that you
% find helpful for describing your paper; these are used to populate
% the "keywords" metadata in the PDF but will not be shown in the document
\icmlkeywords{Machine Learning, Optimizers, Memory Consumption, Matrix Factorization Techniques}

\vskip 0.3in
]

% this must go after the closing bracket ] following \twocolumn[ ...

% This command actually creates the footnote in the first column
% listing the affiliations and the copyright notice.
% The command takes one argument, which is text to display at the start of the footnote.
% The \icmlEqualContribution command is standard text for equal contribution.
% Remove it (just {}) if you do not need this facility.

\printAffiliationsAndNotice{Work was done when Pengxiang Zhao was an intern at Huawei Cloud System AI innovation Lab.}  % leave blank if no need to mention equal contribution
% \printAffiliationsAndNotice{\icmlEqualContribution} % otherwise use the standard text.

\begin{abstract}
As deep learning models exponentially increase in size, optimizers such as Adam encounter significant memory consumption challenges due to the storage of first and second moment data. 
Current memory-efficient methods like Adafactor and CAME often compromise accuracy with their matrix factorization techniques. 
Addressing this, we introduce Adapprox, a novel approach that employs randomized low-rank matrix approximation for a more effective and accurate approximation of Adam's second moment. 
Adapprox features an adaptive rank selection mechanism, finely balancing accuracy and memory efficiency, and includes an optional cosine similarity guidance strategy to enhance stability and expedite convergence. 
% Our empirical evaluations, focusing on training GPT-2 models and subsequent downstream tasks, show that Adapprox not only achieves memory savings of 33.8\% to 49.9\% over Adam while retaining the first moment (and up to 99.9\% when omitted) but also enhances convergence speed and overall performance on pretraining and downstrem tasks compared to counterparts.
% In GPT-2 training and subsequent downstream tasks, Adapprox achieves 34.5\%-49.9\% memory savings for the 117M model and 33.8\%-49.9\% for the 345M model with the first moment enabled compared to AdamW (without it, savings rise to 84.5\%-99.9\% and 83.8\%-99.9\%), respectively, also enhancing convergence speed and task performance compared to counterparts.
In GPT-2 training and downstream tasks, Adapprox surpasses AdamW by achieving 34.5\% to 49.9\% and 33.8\% to 49.9\% memory savings for the 117M and 345M models, respectively, with the first moment enabled, and further increases these savings without the first moment. Besides, it enhances convergence speed and improves downstream task performance relative to its counterparts.
\end{abstract}

\section{Introduction}
In the field of deep learning, optimization algorithms play a pivotal role in training models both efficiently and effectively. Among the most popular optimization algorithms is the Adam \cite{kingma2014adam}, and its variant, AdamW \cite{loshchilov2018decoupled}, known for their robust performance across diverse applications. However, the shift from smaller architectures like AlexNet \cite{krizhevsky2017imagenet} with fewer than 100 million parameters, to colossal models such as GPT-3 \cite{brown2020language}, encompassing over 100 billion parameters, poses substantial memory consumption challenges. This issue becomes particularly acute in resource-constrained environments \cite{steiner2023model}. Despite its effectiveness, the Adam optimizer exacerbates this challenge by necessitating significant memory to store both first and second moments for each parameter to maintain adaptive learning rates.

\begin{figure}[t]
\vskip 0.2in
\begin{center}
\centerline{\includegraphics[width=0.48\textwidth]{./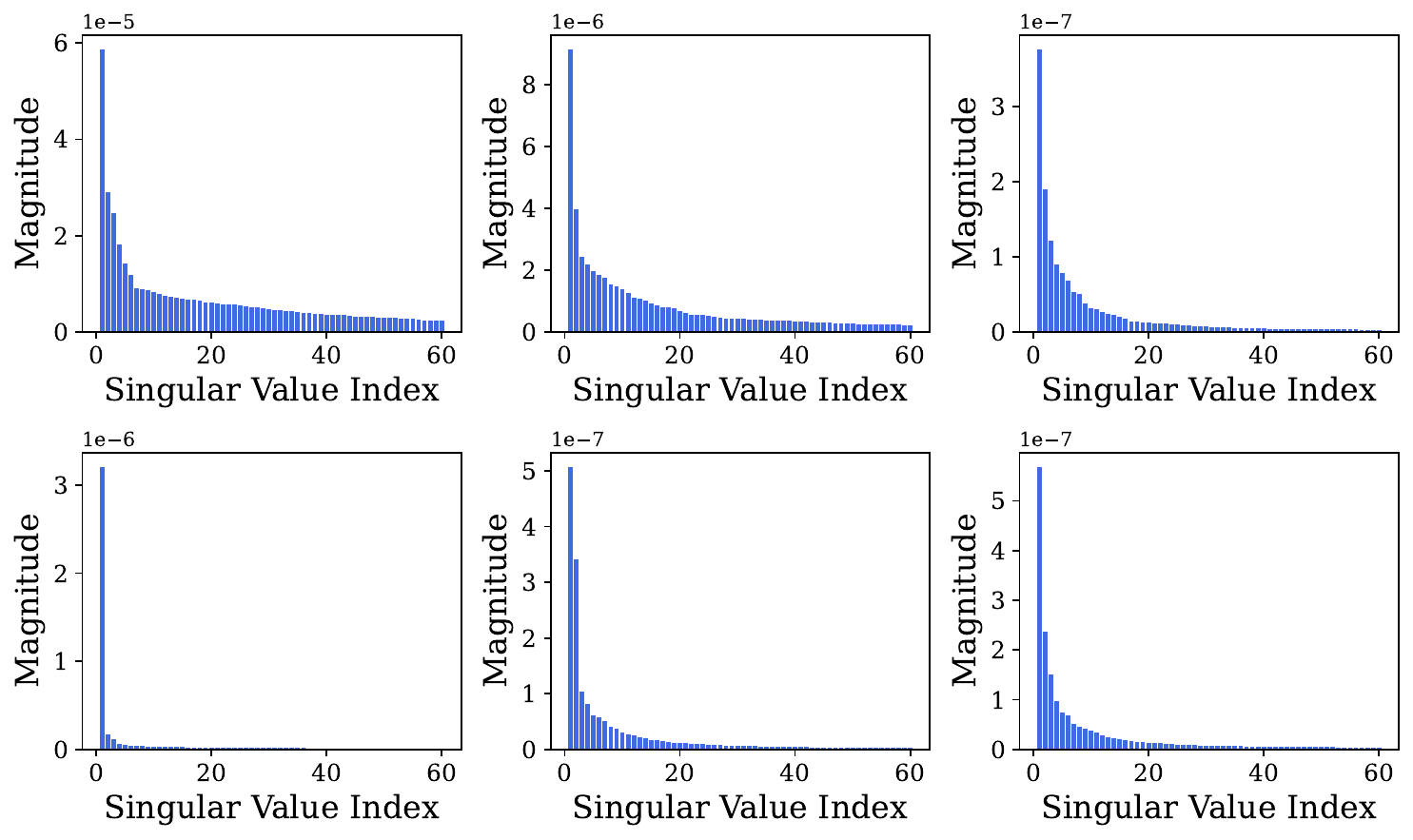}}
\caption{Singular value distributions. This figure shows the top 60 singular values from six second moment matrices, out of a full rank of 1,024, obtained from AdamW training a GPT-2 345M model at the 45,000th iteration.
%The $x$-axis represents the singular value index, while the $y$-axis indicates their respective magnitudes. 
%Each subplot corresponds to a different matrix, highlighting the prevalence of dominant singular values.
}
\label{fig:svd}
\end{center}
\vskip -0.2in
\end{figure}

%Brief literature review on Adam optimization and existing approximation methods.
% Therefore, optimizing memory usage in Adam is critical and serves as an important complementary mechanism alongside other memory reduction strategies such as quantization \cite{han2015deep} and model compression \cite{hinton2015distilling}.
% , and recomputation \cite{chen2016training}. 
There have been notable efforts in the development of memory-efficient optimizers \cite{shazeer2018adafactor, anil2019memory, li2023memory, luo2023came}. %Among these advancements, 
Adafactor \cite{shazeer2018adafactor} 
% stands out with its innovative adaptation of the Adam optimizer. It uniquely
offers the option to omit Adam's first moment, while utilizing a factored representation for the second moment.
% which demonstrates a substantially higher compression rate compared to methods that utilize quantization in optimizer states \cite{li2023memory}. 
% Adafactor optionally eliminates the need for storing the entire first moment and transforms the memory requirements for an $mn$-element second moment matrix from $O(mn)$ to $O(m+n)$, which demonstrates a substantially higher compression rate compared to methods that utilize quantization in optimizer states \cite{li2023memory}. 
However, Adafactor tends to show a decrease in training effectiveness, which is primarily linked to the unavoidable approximation errors that occur in its matrix factorization approach \cite{anil2019memory, luo2023came}. 
To mitigate this performance degradation and improve training stability, the CAME optimizer \cite{luo2023came} extends Adafactor with a confidence-based scaling factor for the parameter updates.
Nevertheless, CAME still relies on the same fundamental factorization technique to store the second moment and confidence statistics, thus inheriting the core challenges encountered by Adafactor.

% \begin{figure*}[t]
% \vskip 0.2in
% \begin{center}
% \centerline{\includegraphics[width=\textwidth]{./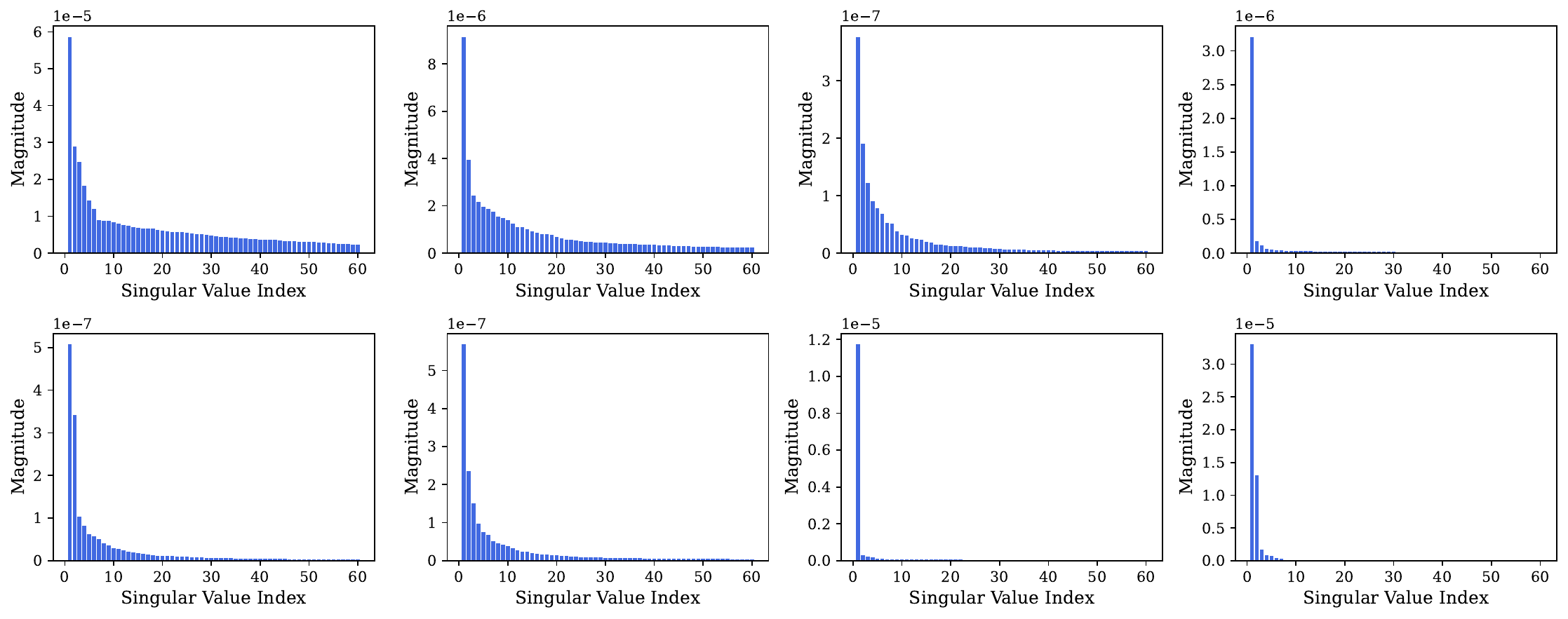}}
% \caption{Singular value distributions. This figure presents the distributions of the top 60 singular values (out of a possible full rank of 1,024) for eight representative second moment matrices extracted from AdamW training a GPT-2 345M model at the 45,000th iteration. The $x$-axis represents the singular value index, while the $y$-axis indicates their respective magnitudes. 
% %Each subplot corresponds to a different matrix, highlighting the prevalence of dominant singular values.
% }
% \label{fig:svd}
% \end{center}
% \vskip -0.2in
% \end{figure*}

In this study, we present Adapprox, a novel approach designed to overcome the limitations associated with memory-efficient optimizers that rely on matrix factorization. Our work is motivated by a crucial observation: in many large-scale model training scenarios, the second moment matrices often have a limited number of dominant singular values, with the rest exhibiting substantially lower magnitudes. This insight is corroborated by empirical evidence, illustrated in Figure \ref{fig:svd}.
% , which reveals that the second moment matrices are distinguished by a small number of considerably higher dominant singular values, with the rest exhibiting substantially lower magnitudes. 
Consequently, Adapprox employs randomized low-rank matrix approximation \cite{liberty2007randomized, li2014large, batselier2018computing} to effectively approximate the second moment in Adam. Our method primarily reduces memory usage by distilling key features from large matrices, while also ensuring a more precise representation.

% This figure displays the singular value distributions of eight representative second moment matrices from the AdamW \cite{loshchilov2018decoupled} optimization of the GPT-2 345M model \cite{radford2019language}. In the figure, the $x$-axis represents the singular value index, while the $y$-axis indicates their respective magnitudes. 

%Compared to standard methods like full Singular Value Decomposition (SVD) or iterative techniques \cite{candes2012exact, lee2013local}, randomized low-rank matrix approximation provides a computationally efficient solution for handling large-scale matrices \cite{liberty2007randomized, li2014large, batselier2018computing}.

Figure \ref{fig:svd} also elucidates the limitations of fixed 1-rank approximations used in Adafactor and CAME. This approach's tendency to compromise accuracy due to multiple dominant singular values is evident in the top row of plots in Figure~\ref{fig:svd}. However, in these instances, a modest increase in the target rank for approximation can substantially improve accuracy.
% a substantial improvement in accuracy can be achieved by making a modest increase in the target rank for approximation. 
% Typically, overestimating the target rank in low-rank approximations results in computationally intensive processes with marginal improvements in precision. Conversely, underestimation can significantly compromise the accuracy. In the case of Adapprox, we implement a dynamic rank-selection mechanism, which heuristically selects an appropriate rank for each target matrix in low-rank approximation.
Considering the singular value distribution patterns of the second moment, as illustrated in Figure \ref{fig:svd}, overestimating the target rank in low-rank approximations can lead to computationally intensive processes with minimal precision improvements. Conversely, underestimating the rank significantly compromises accuracy. In response to this, Adapprox introduces a dynamic rank-selection mechanism, designed to adaptively select an optimal rank for each target matrix within the low-rank approximation process.

% Our proposed method not only improves the precision of approximations but also effectively addresses the issues arising from arbitrary rank selection. Arbitrary selection often leads to overestimation, which, in turn, results in computationally intensive processes with only slight improvements in precision. Alternatively, underestimation due to arbitrary rank selection compromises both the accuracy and the overall performance of the algorithm.

Furthermore, to mitigate systematic errors inherent in low-rank approximations, our approach optionally integrates a cosine similarity guidance mechanism. This method computes the cosine similarity between each update and the first moment, subsequently adjusting the learning rate for that step proportionally. A low cosine similarity results in a reduced update, while a high similarity prompts an increase in the update magnitude.

We evaluated our method through the pretraining of GPT-2 across various configurations and associated downstream tasks. The findings demonstrate that our approach achieves reduced memory usage compared to Adam, while maintaining only a marginally higher memory footprint than Adafactor and CAME. Crucially, our method not only maintains performance on par with Adam but also enables accelerated convergence and improved generalization performance.

Our study's key contributions are as follows:
\begin{itemize}
    \item In light of the singular value distribution of the second moment in Adam, we present Adapprox, which utilizes randomized low-rank matrix approximation to effectively approximate the second moment in Adam.
    \item We develop an adaptive rank selection mechanism that balances precision and memory savings by choosing an appropriate rank for approximating the target matrix.
    \item We integrate an optional cosine similarity guidance strategy into Adapprox, which aims to expedite the convergence process and enhance the stability.
    \item We showcase Adapprox's efficacy on the GPT-2 pretraining and several downstream tasks. 
    Enabling the first moment, Adapprox achieves memory savings ranging from 34.5\% to 49.9\% for the GPT-2 117M model and 33.8\% to 49.9\% for the 345M model, compared to AdamW.  
    Disabling the first moment elevates these savings to 84.5\% to 99.9\% for the 117M model, and to 83.8\% to 99.9\% for the 345M model.
    Furthermore, our experimental results suggest that Adapprox may offer faster convergence and potentially improved performance outcomes.
\end{itemize}

\section{Related Work}
\textbf{Memory Efficient Optimizers.}
Memory efficient optimizers aim to reduce memory usage by compressing optimizer states during training, ideally without affecting the performance efficacy of their standard counterparts.
% Adafactor \cite{shazeer2018adafactor} is an evolution from Adam, achieving memory reduction through two principal strategies. Firstly, it offers the option to omit the first moment. Secondly, it compresses the second moment via an innovative optimal 1-rank matrix factorization method grounded in I-divergence \cite{lee1999learning} minimization. 
Adafactor \cite{shazeer2018adafactor} reduces memory usage by employing two main strategies: the first is the optional omission of Adam's first moment; and the second is the compression of the second moment using a novel 1-rank matrix factorization approach based on minimizing I-divergence \cite{lee1999learning}.
% Although Adafactor demonstrates significant memory savings and computational efficiency, it is important to acknowledge the potential trade-off of reduced accuracy due to such factorization \cite{anil2019memory, luo2023came}.
While Adafactor achieves notable memory savings and computational efficiency, it is crucial to recognize the potential compromise in accuracy that may arise from its factorization approach \cite{anil2019memory, luo2023came}.
SM3 \cite{anil2019memory} represents a memory efficient variation of Adagrad \cite{duchi2011adaptive}. Experimentally, it proves especially effective in situations where gradients show natural activation patterns. 
CAME \cite{luo2023came} builds upon Adafactor's framework by incorporating a confidence-guided strategy to elevate approximation accuracy. However, this method also involves compressing and storing confidence data using an identical matrix factorization technique, resulting in similar accuracy-related challenges. Moreover, CAME's dependency on the first moment for its confidence strategy limits the potential for omitting this data, thereby constraining further memory savings.
4-bit Adam \cite{li2023memory} employs quantization techniques to compress Adam's first and second moments. Notably, this quantization is compatible with matrix factorization methods.

\textbf{Low-Rank Matrix Approximation.} Low-rank matrix approximation seeks to represent a matrix using lower-rank matrices, aiming for a more efficient data representation while endeavoring to retain as much information as possible.
% Consider a target matrix $A \in \mathbb{R}^{m \times n}$; its $k$-rank approximation can be expressed as $A \approx QU^\top$, where $Q \in \mathbb{R}^{m \times k}$ and $U \in \mathbb{R}^{n \times k}$ are two feature matrices ($1 \leq k \ll \min\{m, n\}$). This approach allows for more economical storage, reducing the space complexity from $O(mn)$ to $O(k(m+n))$. 
Low-rank matrix approximation is utilized across a wide spectrum of applications, including principal component analysis (PCA) \cite{shen2008sparse, papailiopoulos2013sparse}, image processing \cite{haeffele2014structured, guo2017patch, chen2017denoising}, and a variety of machine learning scenarios \cite{paterek2007improving, li2016low}.

\section{Methodology}
% In this section, we begin with a brief overview of Adam, followed by an exploration of the randomized low-rank approximation method. We then present our proposed adaptive rank selection mechanism to balance memory efficiency and performance. Next, we provide a detailed description of the Adapprox optimizer. Additionally, we discuss a cosine similarity guidance strategy, designed to enhance stability and expedite convergence.
This section begins with a brief overview of Adam, followed by an examination of the randomized low-rank approximation method. Subsequently, we introduce the adaptive rank selection mechanism. We then provide a comprehensive description of the Adapprox optimizer. Lastly, we delve into the proposed cosine similarity guidance strategy.
% , designed to enhance stability and hasten convergence.

\subsection{Overview of the Adam Optimizer}
Consider a function $f(W)$, where $W \in \mathbb{R}^{m \times n}$ denotes the parameters of the neural network.
% (such as in a linear layer, or attention matrices). 
The update rule for Adam \cite{kingma2014adam} at the $t$-th iteration is defined as follows:
\begin{align}
\text{(Adam)}
\left\{
\begin{array}{ll}
G_t & = \nabla f(W_{t-1}), \\
M_t & = \beta_1 M_{t-1} + (1 - \beta_1) G_t,  \\
V_t & = \beta_2 V_{t-1} + (1 - \beta_2) G_t^2, \\
\hat{M}_t & = M_t/(1 - \beta_1^t), \\
\hat{V}_t & = V_t/(1 - \beta_2^t), \\
W_{t} & = W_{t-1} - \alpha \hat{M}_t/(\sqrt{\hat{V}_t} + \epsilon).
\end{array}
\right.
\end{align}
Here, all computations are element-wise. $G_t$ represents the gradient arranged in matrix form. $M_t$ and $V_t$ are the exponential running averages of the first and second moments, respectively. $\hat{M}_t$ and $\hat{V}_t$ are the bias-corrected versions of $M_t$ and $V_t$. $\beta_1$ and $\beta_2$ control these moment estimates. Additionally, $\alpha$ denotes the learning rate, and $\epsilon$ is a small positive constant introduced to prevent division by zero.

Building upon Adam, AdamW \cite{loshchilov2018decoupled} decouples weight decay from the gradient updates. With this change, the parameter update step in AdamW is
\begin{equation}\label{equ:weight_decay}
    W_{t} = W_{t-1} - \alpha( \hat{M}_t/(\sqrt{\hat{V}_t} + \epsilon) + \lambda W_{t-1}),
\end{equation}
where $\lambda$ is the rate of the weight decay.

As mentioned, Adam requires the storage of both $M_t$ and $V_t$ at each step, which requires $O(mn)$ extra memory.
% Consequently, compressing these components becomes essential to enhance memory efficiency \cite{shazeer2018adafactor, luo2023came}.

\subsection{Low-Rank Approximation of the Second Moment}
For a matrix $A \in \mathbb{R}^{m \times n}$, deriving its low-rank approximation can be formulated as an optimization problem:
\begin{equation}
    \min_{Q, U} \|A - QU^\top\|_F^2,
\end{equation}
where $\|\cdot\|_F$ is the Frobenius norm and $Q \in \mathbb{R}^{m \times k}$ and $U \in \mathbb{R}^{n \times k}$ are two feature matrices ($1 \leq k \ll \min\{m, n\}$). Then, $A_k = QU^\top$ is the $k$-rank approximation of $A$.
The optimal $A_k$ can be determined by performing a full Singular Value Decomposition (SVD) and then truncating it to retain only the top 
$k$ singular values and their corresponding singular vectors, resulting in the following representation (see Theorem 2.4.8 in \cite{golub2013matrix}):
\begin{equation}
    A_k = \sum_{i=1}^k \sigma_i u_i v_i^T,
\end{equation}
where $\sigma_1 \geq \sigma_2 \geq \dots \geq \sigma_k \geq 0$ are the top $k$ singular values of $A$, and $u_i$ and $v_i$ $(1\leq i \leq k)$ are corresponding left and right singular vectors.
The approximation error can be explicitly expressed as follows:
\begin{equation}\label{equ:svd_error}
    \|A-A_k\|_F^2 = \sum_{i=k+1}^{\min\{m, n\}}\sigma_i^2.
\end{equation}
% As depicted in Figure \ref{fig:svd}, the distribution of singular values for the second moment matrix in training large language models, such as GPT-2, typically exhibits a few dominant singular values accompanied by numerous smaller values. Consequently, employing $k$-rank approximation to compress the second moment matrix becomes a logical approach in this context.
Given Equation \eqref{equ:svd_error} and the singular value distribution of the second moment matrix shown in Figure \ref{fig:svd}, utilizing a $k$-rank approximation to compress the second moment matrix emerges as a rational choice.
% Unlike Adafactor's fixed rank-1 approximation, our approach opts for a more adaptable low-rank approximation method. This is particularly beneficial in instances where the second moment matrices display multiple dominant singular values that cannot be overlooked. Consequently, employing a slightly higher rank for approximation can lead to substantial improvements in precision.
Nevertheless, the computation of the full SVD for large matrices presents considerable computational and memory challenges. We mitigate these issues by leveraging randomized low-rank matrix approximation algorithms \cite{liberty2007randomized, halko2011finding, nakatsukasa2020fast}, which provide a balance of computational efficiency and memory economy, while still delivering high-quality low-rank approximations. 

Our implementation utilizes the Gaussian sampling variant of the randomized SVD algorithm \cite{halko2011finding}. In our approach, we bypass the SVD estimation to streamline the process, concentrating solely on the extraction of feature matrices without the need for singular values. Additionally, we incorporate an oversampling mechanism, which further refines the algorithm’s ability to capture more precise subspace representations. The comprehensive procedure of this modified method is detailed in Algorithm \ref{alg:rsi}, termed Streamlined Randomized Subspace Iteration (S-RSI).

\begin{algorithm}[t]
\caption{Streamlined Randomized Subspace Iteration}
\label{alg:rsi}
\begin{algorithmic}
\STATE \textbf{Inputs:} Target matrix $A \in \mathbb{R}^{m \times n}$, target rank $k$, integer $l$, and integer $p$ with $(k+p) \leq \min\{m, n\}$
%\STATE \text{Draw an $n\times (k+p)$ standard Gaussian matrix $U$}
\STATE $U \sim \mathcal{N}(0, 1)$
\STATE $Q, R \leftarrow \mathbf{0}^{m \times (k+p)}, \mathbf{0}^{(k+p) \times (k+p)}$
\FOR{$i \leftarrow 1, 2, \dots, l$}
    \STATE $Q \leftarrow AU$
    \STATE $Q, R \leftarrow \text{QR decomposition}(Q)$
    \STATE $U \leftarrow A^\top Q$
\ENDFOR
\STATE \textbf{return} $Q[:, :k], U^\top[:,:k]$
\end{algorithmic}
\end{algorithm}

The S-RSI aims to compute an approximate basis $Q \in \mathbb{R}^{m\times k}$ which has orthonormal columns for the column space of the target matrix $A \in \mathbb{R}^{m \times n}$ such that
\begin{equation}
    A_k = Q Q^\top A.
\end{equation}
We form $U = Q^\top A$, and thus we obtain two feature matrices $Q \in \mathbb{R}^{m\times k}$ and $U^\top \in \mathbb{R}^{n \times k}$. 
% The task of computing $Q$ is executed very efficiently with random sampling methods.
% Suppose that we draw a random vector $u$, ensuring that the elements of $u$ are independently and identically distributed according to a standard Gaussian distribution.  Upon computing $q = Au$, we posit that $q$ serves as a stochastic representation from the column space of $A$. This assertion is rooted in the fact that $q$ is inherently a random linear combination of columns of $A$.
Computing $Q$ is highly efficient using random sampling methods. Consider drawing a random vector $u$, with each element independently and identically distributed according to a standard Gaussian distribution. When we compute $q = Au$, it effectively serves as a stochastic representation of the column space of $A$, because $q$ represents a random linear combination of the columns of $A$.
By repeating this sampling process $k$ times, we get a set of random vectors:
\begin{equation}\label{equ:q_def}
    \{q_i\ |\ q_i = Au_i,\ i=1, 2, \dots, k\}.
\end{equation}
Due to the inherent randomness in $u_i$'s generation, the set of vectors $\{u_i\}_{i=1}^k$ are expected to occupy a general linear position, which implies a high likelihood that any subset of these vectors is linearly independent. Consequently, this observation leads us to propose the following:
\begin{proposition}
Given a set of randomly generated vectors $\{u_i\}_{i=1}^k$ that are in a general linear position, and a full rank matrix $A \in \mathbb{R}^{m \times n}$, the set of vectors $\{q_i\ |\ q_i = Au_i\}_{i=1}^k$ are also linearly independent.
\end{proposition}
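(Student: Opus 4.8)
The plan is to translate the linear independence of $\{q_i\}_{i=1}^{k}$ into a statement about the null space of $A$. First I would suppose a nontrivial linear dependence $\sum_{i=1}^{k} c_i q_i = 0$ and use the linearity of $A$ to write
\[
0 = \sum_{i=1}^{k} c_i q_i = \sum_{i=1}^{k} c_i A u_i = A\Bigl(\sum_{i=1}^{k} c_i u_i\Bigr).
\]
Setting $u = \sum_{i=1}^{k} c_i u_i$, this says precisely that $u \in \ker(A)$. Since the $u_i$ are in general linear position and $k \le \min\{m,n\}$ (guaranteed by the constraint $(k+p)\le\min\{m,n\}$ in Algorithm \ref{alg:rsi}), the $u_i$ are themselves linearly independent, so once I know $u = 0$ I can conclude all $c_i = 0$. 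Hence the entire claim reduces to showing that the only element of $\operatorname{span}\{u_i\}_{i=1}^{k}$ lying in $\ker(A)$ is the zero vector, i.e.\ $\operatorname{span}\{u_i\}_{i=1}^{k} \cap \ker(A) = \{0\}$.

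The easy case is when $A$ has full column rank, which happens exactly when $m \ge n$: then $\operatorname{rank}(A) = n$, so $\ker(A) = \{0\}$, the intersection is trivially $\{0\}$, and the $q_i$ are independent. The substantive case is the ``wide'' regime $m < n$, where $A$ has full row rank $m$ but a kernel of dimension $n-m > 0$. Here I would argue by a dimension count combined with genericity: $\operatorname{span}\{u_i\}$ is $k$-dimensional and $\ker(A)$ is $(n-m)$-dimensional, and because $k \le m$ we have $k + (n-m) \le n$, so two subspaces of these dimensions meet only at the origin in generic position. Concretely, letting $K \in \mathbb{R}^{n\times(n-m)}$ be a basis of $\ker(A)$ and $U = [\,u_1 \mid \cdots \mid u_k\,]$, the intersection is nontrivial precisely when the augmented matrix $[\,U \mid K\,] \in \mathbb{R}^{n\times(k+n-m)}$ fails to have full column rank.

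The main obstacle, and the role played by the general-linear-position hypothesis, is ensuring this rank deficiency does not occur. I would make it precise by observing that full column rank of $[\,U \mid K\,]$ is equivalent to the non-vanishing of at least one of its maximal minors, a polynomial condition in the entries of $U$; since the $u_i$ are drawn i.i.d.\ from a continuous (Gaussian) distribution and $K$ is fixed with full column rank, the configurations violating this condition form a proper algebraic subvariety of $\mathbb{R}^{n\times k}$, hence a set of Lebesgue measure zero. Thus with probability one the $u_i$ sit in general position relative to $\ker(A)$, the intersection is trivial, and the $q_i$ are linearly independent. The one point requiring care is the bound $k \le m$: without it the generic intersection of the two subspaces would itself have positive dimension and the conclusion would genuinely fail, which is why the algorithmic constraint $(k+p)\le\min\{m,n\}$ is exactly what is needed.
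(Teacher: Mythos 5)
Your proof is correct and is in fact more complete than the paper's own. Both arguments begin identically, reducing a putative dependence $\sum_i c_i q_i = 0$ to $A\bigl(\sum_i c_i u_i\bigr) = 0$; but at that point the paper simply asserts that, because $A$ is full rank, the combination cannot vanish unless all coefficients do. That assertion is valid only when $A$ has full \emph{column} rank, i.e.\ $\ker(A) = \{0\}$, which for a full-rank $m \times n$ matrix holds exactly when $m \geq n$ --- your easy case. In the wide regime $m < n$, which genuinely occurs for the second-moment matrices in question, a full-rank matrix has an $(n-m)$-dimensional kernel, and the paper's one-line conclusion has a gap that your argument repairs: you correctly identify the needed condition as $\operatorname{span}\{u_i\} \cap \ker(A) = \{0\}$, observe that rank deficiency of $[\,U \mid K\,]$ is the simultaneous vanishing of its maximal minors --- a polynomial condition in the entries of $U$ that is not identically zero precisely because $k \leq m$ (guaranteed by the algorithmic constraint $(k+p) \leq \min\{m,n\}$; one checks non-vanishing by completing the columns of $K$ to a basis) --- and invoke absolute continuity of the Gaussian to conclude that the $u_i$ avoid the kernel almost surely. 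One could say the paper implicitly bundles all of this into its undefined hypothesis that the $u_i$ are ``in general linear position,'' since for random vectors that phrase is itself an almost-sure property; your proof makes explicit both the probabilistic content of that hypothesis and the dimension bound $k \leq m$ without which the statement is simply false. The paper's version buys brevity, but read literally it proves the proposition only for $m \geq n$; yours proves it (almost surely) in both regimes and pinpoints exactly where the oversampling constraint is used.
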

\begin{proof} 
% The assertion relies on the fundamental property of linear independence among the vectors $\{u_i\}_{i=1}^k$ and the full rank characteristic of $A$.
Linear independence of $\{u_i\}_{i=1}^k$ implies that $\sum_{i=1}^k a_i u_i = 0$ holds when all scalars $\{a_i\}_{i=1}^k$ are zero. We examine a linear combination of the vectors $\{q_i\}_{i=1}^k$:
\begin{equation}
   \sum_{i=1}^k a_i q_i = \sum_{i=1}^k a_i A u_i =  A\sum_{i=1}^k a_i u_i.
\end{equation}
As $A$ is full rank, $\sum_{i=1}^k a_i q_i \neq 0$ unless all $a_i$ are zero, indicating that vectors $\{q_i\}_{i=1}^k$ are linearly independent.
\end{proof}
% Consequently, the path to deriving an orthonormal basis for the column space of matrix $A$ is paved by initially arranging the set $\{q_i\}_{i=1}^k$ as the columns of a matrix $Q$ and then applying an appropriate orthonormalization procedure, such as QR decomposition \cite{golub2013matrix}.
Consequently, to derive an orthonormal basis for the column space of matrix $A$, we first arrange the set $\{q_i\}_{i=1}^k$ as the columns of a matrix $Q$, and then we apply an orthonormalization procedure, like QR decomposition \cite{golub2013matrix}.

To enhance the precision of random sampling for basis determination in scenarios where the input matrix exhibits a flat singular spectrum and is of considerable size, we incorporate a power iteration technique \cite{rokhlin2010randomized, halko2011finding, golub2013matrix}. Specifically, this approach applies the randomized sampling methodology to the modified matrix
\begin{equation}
    A^\prime = (A A^\top)^l A,
\end{equation}
where $l$ is a small integer (for example, $l=5$). 
According to SVD, matrix $A$ can be expressed as $A = Q\Sigma U^T$, where $Q$ and $U$ are orthogonal matrices encapsulating the left and right singular vectors of $A$, respectively, and $\Sigma$ is a diagonal matrix composed of the singular values of $A$. 
% Leveraging this decomposition, matrix $A^\prime$ is derived as follows:
Then, $A^\prime$ is derived as follows:
\begin{equation}
    A^\prime = (Q\Sigma U^T U \Sigma Q^\top)^l Q\Sigma U^T = Q(\Sigma)^{2l+1} U^T.
\end{equation}

Hence, while $A^\prime$ retains the same singular vectors as $A$, its singular values exhibit an accelerated rate of decay:
\begin{equation}
    \sigma_i(A^\prime) = \sigma_i(A)^{2l+1},
\end{equation}
% a characteristic that is particularly advantageous for low-rank approximation techniques. 
This enhances the efficacy of distinguishing between more significant and less significant singular vectors, thereby improving the overall accuracy of the approximation.
Furthermore, to augment the precision, we utilize a small oversampling parameter $p$, such as $p=5$, to increase the sample size beyond the target rank, thereby providing a buffer that mitigates the risk of omitting significant components. 

Through the S-RSI method, we can efficiently compress the storage of matrix $A$ from $O(mn)$ to $O(k(m+n))$ by retaining only matrices $Q$ and $U^\top$ with time complexity of $O(lmn(k+p))$. Following the randomized SVD algorithm \cite{halko2011finding}, the approximation error bound is:
\begin{equation}\label{equ:error_bound}
\begin{aligned}
    \mathbb{E} \| A - QU^\top\| &\leq \left[ \left( 1 + \sqrt{\frac{k}{p - 1}} \right)^{2l+1} \sigma_{k+1}^{2l+1} \right. \\
     + \frac{e\sqrt{k + p}}{p} &\left. \sqrt{\sum_{j > k} \sigma_j^{2(2l+1)}} \right]^{1/(2l+1)}.
\end{aligned}
\end{equation}
According to Equation \eqref{equ:error_bound}, the approximation error can be reduced by increasing not just $k$, but also $p$ and $l$.
We further demonstrate the efficacy of the S-RSI through empirical comparisons. Specifically, we evaluate its performance in comparison to Adafactor's matrix factorization approach and the SVD, focusing on all second-moment matrices obtained during the training of a GPT-2 345M model using AdamW. Results are shown in Figure \ref{fig:s-rsi}.

\begin{figure*}[t]
    \centering
    \subfigure[Mean approximation error vs. rank.]{
    \includegraphics[width=0.45\textwidth]{./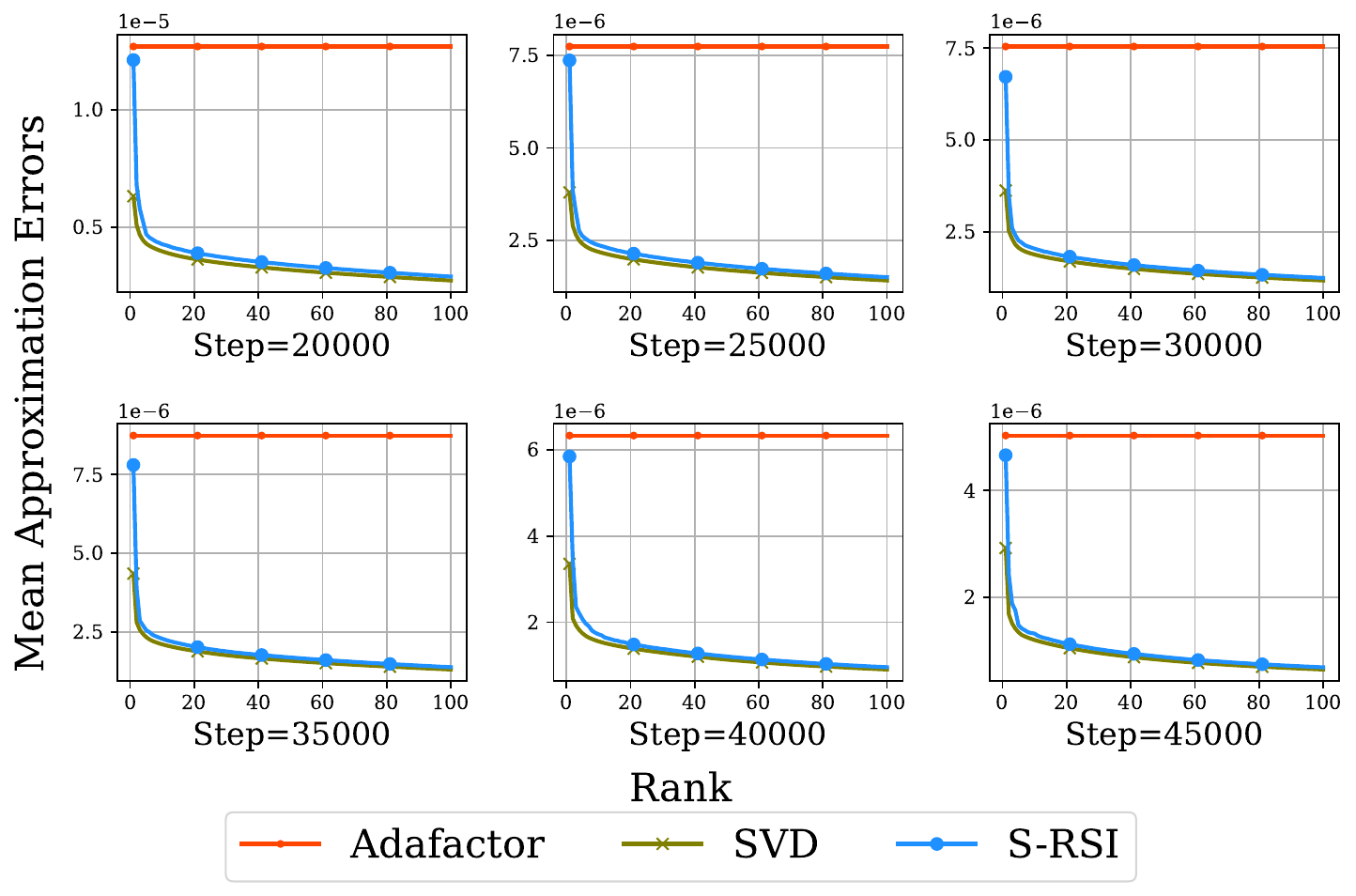} \label{fig:s-rsi-sub1}
    }
    \subfigure[Mean computation time vs. rank.]{
    \includegraphics[width=0.45\textwidth]{./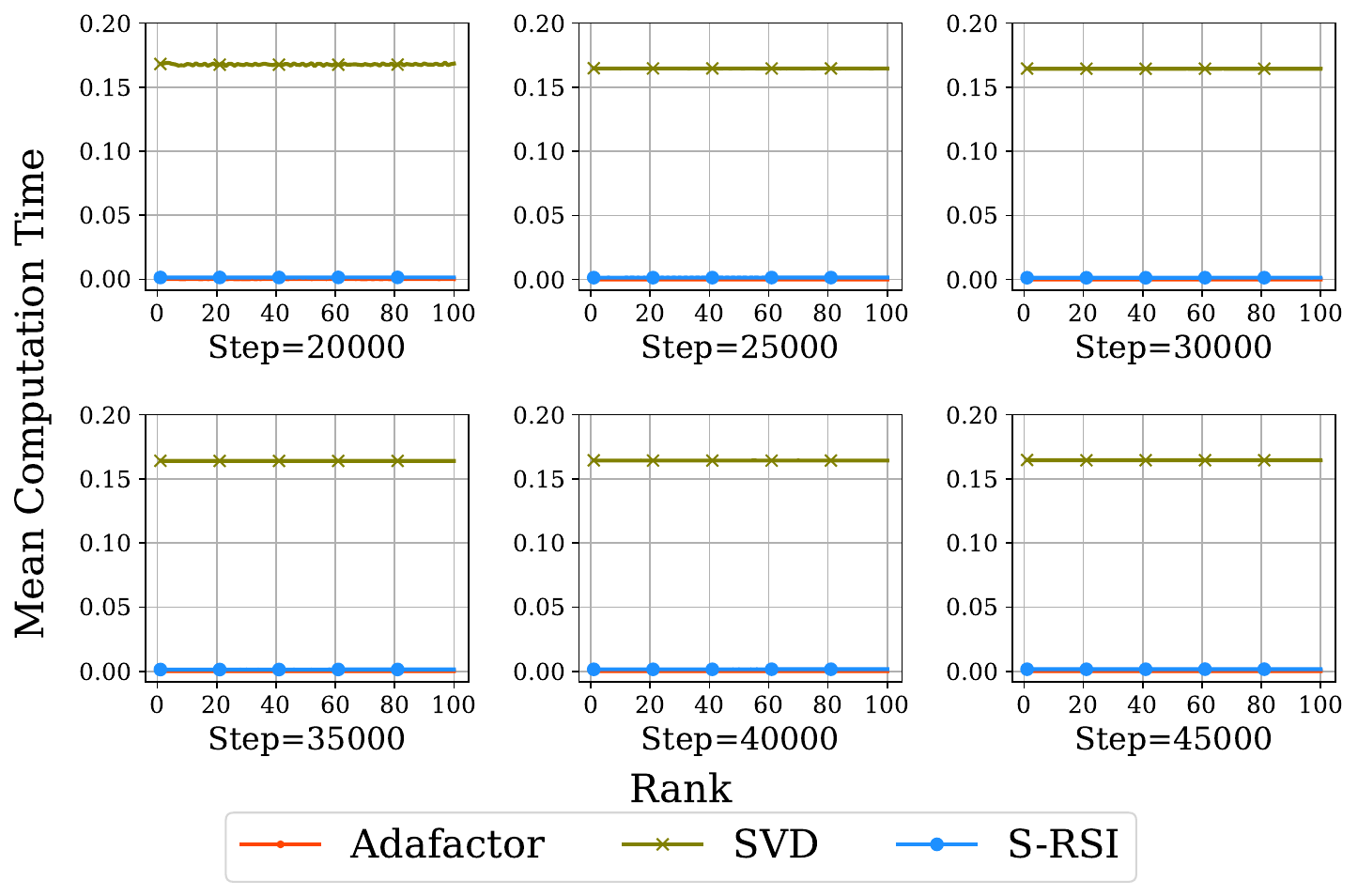}\label{fig:s-rsi-sub2}
    }
    \caption{Comparative analysis of the S-RSI ($l=5$ and $p=5$) against Adafactor and SVD. All methods are applied to the second-moment matrices derived from training a GPT-2 345M model using the AdamW, with results captured at various stages of the training process. 
    % Specific steps are detailed in each corresponding subfigure.
    }
    \label{fig:s-rsi}
\end{figure*}

% Figure \ref{fig:s-rsi} presents the comparative results, showcasing the mean approximation error versus rank alongside the mean computation time versus rank. Since Adafactor's factorization method is a fixed 1-rank approximation, its results remain constant and unaffected by variations in rank. Given that SVD represents the optimal low-rank approximation under the Frobenius norm, its approximation error serves as a benchmark bound. As depicted in Figure \ref{fig:s-rsi}, for both SVD and S-RSI, there is a significant reduction in approximation error with only a slight increase in rank. 
% Consequently, a fixed 1-rank approach may fall short of meeting high precision requirements.
Figure \ref{fig:s-rsi} compares the mean approximation error and computation time of varying ranks, with the S-RSI employing hyperparameters $l=5$ and $p=5$. Adafactor's factorization method employs a fixed 1-rank approximation, leading to consistent results regardless of rank changes. The SVD serves as a benchmark for optimal low-rank approximation.
The SVD and S-RSI show a substantial decrease in approximation error with a minimal increase in rank.
Besides, as the rank increases, the S-RSI approaches the SVD benchmark bound.
In terms of computation time, Adafactor is the most efficient with a consistent duration. The S-RSI not only significantly reduces computation time compared to SVD but also approaches the efficiency of Adafactor.
While the S-RSI shows a linear relationship with rank $k$ theoretically, the actual rate of increase is considerably less than $k$. This can be largely attributed to the multi-core architecture of GPUs and parallel computing.
Consequently, the S-RSI provides an advantageous balance of approximation accuracy and computational efficiency.

% Additionally, S-RSI demonstrates a linear relationship with rank, where the practical linear rate is significantly lower than $k$. This enhanced efficiency can be attributed to the multi-core architecture of GPU. Therefore, the S-RSI offers a favorable balance between approximation accuracy and computational efficiency.

\subsection{Adaptive Rank Selection}
% As demonstrated in Figure \ref{fig:s-rsi}, the selection of an appropriate rank $k$ is crucial for the S-RSI method. A larger $k$ may lead to a computationally demanding process with only marginal gains in precision, whereas a smaller $k$ may substantially reduce the accuracy.
% Therefore, we design an adaptive rank selection mechanism, which dynamically adjusts the rank $k$ in response to each target matrix and according to the progression of training steps.
The results in Figure \ref{fig:s-rsi} underscore the importance of selecting the proper rank $k$ for the S-RSI. Choosing a larger $k$ can result in increased computational demands for minimal precision improvements, while a smaller $k$ risks significantly compromising accuracy. To address this, we develop an adaptive rank selection mechanism, which dynamically adjusts $k$ for each target matrix.

% Specifically, the algorithm initiates with rank $k_t$. Every adaptive interval $\Delta s$, we evaluate the approximation error rate:
The algorithm updates the rank $k_t$ at regular intervals of $\Delta s$ steps. Initially set as $k_\textit{init}$, $k_t$ is used to perform the S-RSI. Then, we evaluate the approximation error rate:
\begin{equation}
    \xi = \frac{\|A-QU^\top\|_F}{\|A\|_F}.
\end{equation}
Should $\xi$ rise above a threshold $\xi_\textit{thresh}$, $k_t$ is adjusted to $k_t + f(\xi)$, ensuring that it does not exceed the maximum rank $k_{\textit{max}}$. We model $f$ as a variant of the sigmoid function including four hyperparameters $\eta, \omega, \phi$ and $\tau$:
\begin{equation}\label{equ:f}
    f(\xi) = \left\lfloor \frac{\eta}{\exp(\omega\xi + \phi) + \tau} \right\rfloor, \ \xi > 0,
\end{equation}
where $\eta > 0$ sets the upper bound of $f$'s range, $\omega < 0$ dictates the rate of change, and $\phi$ and $\tau$ adjust the function's offset. This adheres to the heuristic principle that as $\xi$ increases, the expansion rate of $k_t$ should initially be gradual, then accelerate, and ultimately stabilize near an upper limit.
% This encapsulates the heuristic principle that the expansion rate of \( k \) should initially be gradual, subsequently accelerate, and eventually stabilize near an upper limit.
By Equation \eqref{equ:q_def}, this extension requires sampling $f(\xi)$ additional vectors from a standard Gaussian distribution and then applying QR decomposition again. The revised $k_t$ is then maintained for the next $\Delta s$ iterations. 
The pseudocode for this method is summarized in Algorithm \ref{alg:as-rsi}, which we designate as Adaptive S-RSI (AS-RSI).

\begin{algorithm}[t]
\caption{Adaptive S-RSI}
\label{alg:as-rsi}
\begin{algorithmic}
\STATE \textbf{Inputs:} Target matrix $A \in \mathbb{R}^{m \times n}$, rank $k_{t-1}$, initial rank $k_\textit{init}$, the maximum rank $k_\textit{max}$, integer $l$, integer $p$, threshold $\xi_\textit{thresh}$, step $t$, and adaptive interval $\Delta s$
\IF{$(t \mod \Delta s) = 1$}
    \STATE $k_t \leftarrow k_\textit{init}$
    \REPEAT
        \STATE $Q, U^\top \leftarrow $ \text{S-RSI}($A, k_t, l, p$)
        \STATE $\xi \leftarrow \|A-QU^\top\|_F/\|A\|_F$
        \STATE $k_t \leftarrow \min \{ k_t + f(\xi), k_\textit{max}\}$\COMMENT{Equation \eqref{equ:f}}
        \STATE $p \leftarrow \min\{p, k_\textit{max} - k_t\}$
    \UNTIL{$\xi \leq \xi_\textit{thresh}$}
\ELSE 
\STATE $k_t \leftarrow k_{t-1}$
\STATE $Q, U^\top \leftarrow $ \text{S-RSI}($A, k_t, l, p$)
\ENDIF
\STATE \textbf{return} $Q[:, :k_t], U^\top[:,:k_t]$, $k_t$
\end{algorithmic}
\end{algorithm}

\subsection{Adapprox Algorithm}
\begin{algorithm}[t]
\caption{Adapprox}
\label{alg:adapprox}
\begin{algorithmic}
\STATE \textbf{Inputs:} Initial point $W_0 \in \mathbb{R}^{m \times n}$, $M_0 = {\mathbf0}^{m \times n}$ and $V_0 = \mathbf{0}^{m \times n}$, learning rates $\{\alpha_t\}_{t=1}^T$, moment decay $\beta_1$ and $\beta_2$, small constant $\epsilon$, clipping threshold $d$, initial rank $k_\textit{init}$, the maximum rank $k_\textit{max}$, integer $l$, integer $p$ with $(k_\textit{init}+p) \leq k_\textit{max}$, threshold $\xi_\textit{thresh}$, adaptive interval $\Delta s$, and weight decay rate $\lambda$
\STATE \text{Initialize} $ Q_0 = \mathbf{0}^{m \times k_\textit{init}}, U_0^\top = \mathbf{0}^{n \times k_\textit{init}}$, $k_0 = k_\textit{init}$
\FOR{$t \leftarrow 1, 2, \dots, T$}
    \STATE $G_t \leftarrow \nabla f_t(W_{t-1})$
    \STATE $V_t \leftarrow \beta_2 Q_{t-1}U_{t-1}^\top + (1 - \beta_2) G_t^2$
    % \STATE $ Q_t, U_t^\top, k_t = \text{AS-RSI}(V_t, k_{t-1},  k_\textit{init}, k_\textit{max}, l, p, \epsilon_\textit{thresh}, t, \Delta s)$
    \STATE $ Q_t, U_t^\top, k_t \leftarrow \text{AS-RSI}(V_t, k_{t-1}, k_\textit{init}, k_\textit{max}, l, p, \xi_\textit{thresh},$
    \STATE $\quad\quad\quad\quad\quad\quad t, \Delta s)$
    \STATE $M_t \leftarrow G_t / (\sqrt{V_t} + \epsilon)$
    \STATE $M_t \leftarrow M_t / \max(1, \text{RMS}(M_t) / d)$
    \IF{$\beta_1 > 0$}
        \STATE $M_t \leftarrow \beta_1 M_{t-1} + (1 - \beta_1) M_t$
    \ENDIF
    \STATE $W_t \leftarrow W_{t-1} - \alpha_t (M_t + \lambda W_{t-1})$
\ENDFOR
\end{algorithmic}
\end{algorithm}
The integration of the proposed methodologies culminates in the Adapprox algorithm defined in Algorithm \ref{alg:adapprox}.
At each step, $f_t$ represents a stochastic realization of the objective function $f$, exemplified by the loss function computed using a randomly selected mini-batch of data. We then compute the gradient $G_t$ relative to the previous parameters and the exponential running averages of the second moment $V_{t}$. 
Note that $V_{t-1}$ is reconstructed from $Q_{t-1}U_{t-1}^\top$, while $V_{t}$ is factored using the AS-RSI. 
% Specifically, AS-RSI selects the rank $k_0$ within the range from $k_{\textit{init}}$ to  $k_{\textit{max}}$. This selected rank is then utilized as the input for S-RSI during the ensuing interval $\Delta s$.
Subsequently, we calculate the update $M_t = G_t/(\sqrt{V_t} + \epsilon)$ and incorporate the update clipping mechanism as proposed in \cite{shazeer2018adafactor} to mitigate excessively large updates:
\begin{equation*}
    M_t \leftarrow \frac{M_t}{\max(1, {\text{RMS}(M_t)}/{d})},\ \text{RMS}(M_t) = \frac{\|M_t\|_F}{\sqrt{mn}},
\end{equation*}
where $d$ is the clipping threshold.
We also offer the option to omit the first moment, depending on whether $\beta_1$ is set to zero. Finally, parameter updates are executed in a decoupled weight decay fashion, as delineated in Equation \eqref{equ:weight_decay}.

The proposed Adapprox, while rooted in Adam, diverges in three key aspects: (1) it omits the bias correction steps; (2) it integrates an update clipping mechanism that enhances performance (as detailed in Appendix~\ref{app:clip}); 
and (3) it modifies the first moment accumulator, replacing the running average of the gradient with the running average of the update.

\subsection{Cosine-Similarity Guidance Strategy}
% Taking into account the inherent errors in low-rank approximation, 
Inspired by CAME, we incorporate an optional mechanism aimed at enhancing the update process and expediting the training phase when $\beta_1 > 0$. Specifically, this involves a heuristic method that utilizes the cosine similarity between the current update and the running average of updates to assess the confidence level of the approximation.

To elaborate, for the given current update
\begin{equation}
\hat{M}_t \leftarrow \frac{G_t}{\sqrt{V_t} + \epsilon}, \
\hat{M}_t \leftarrow \frac{\hat{M}_t}{\max(1, {\text{RMS}(\hat{M}_t)}/{d})},
\end{equation}
and the running average of updates
\begin{equation}
M_t \leftarrow \beta_1 M_{t-1} + (1 - \beta_1) \hat{M}_t,
\end{equation}
we calculate the cosine similarity measure
\begin{equation}
\theta_{\textit{cos}} = \frac{\sum_{i,j}(\hat{M}_t \odot M_t)_{ij}}{\|\hat{M}_t\|_F \| M_t\|_F} \in [-1, 1],
\end{equation}
where $\odot$ denotes element-wise multiplication.

When $\theta_{\textit{cos}}$ approaches 1, it indicates a better alignment between the directions of $\hat{M}_t$ and $M_t$. Conversely, a $\theta_{\textit{cos}}$ value close to $-1$ suggests poor alignment. In the case where $\theta_{\textit{cos}} = 0$, $\hat{M}_t$ and $M_t$ are orthogonal to each other.
To modulate the update based on $\theta_{\textit{cos}}$, we introduce a factor that either penalizes or accelerates the update accordingly:
\begin{equation}
    M_t \leftarrow \frac{M_t}{1 - \theta_{\textit{cos}} + \epsilon},
\end{equation}
which adjusts $M_t$ inversely with respect to $\theta_{\textit{cos}}$, enhancing the update for alignment and penalizing it for divergence.

It is noteworthy that our proposed cosine-similarity guidance strategy is effective exclusively when $\beta_1 > 0$, which is consistent with CAME. However, our strategy is designed to utilize only the information available at the current step, thereby obviating the need for extra memory.

\section{Experiments}
In this section, we present a comparative analysis of Adapprox against existing counterparts in pretraining GPT-2 \cite{radford2019language} and in subsequent downstream tasks.
\newcolumntype{Y}{>{\centering\arraybackslash}X}
\subsection{Setup}
We investigate two GPT-2 configurations: 117M and 345M, as outlined in Table \ref{tab:config}. Our pretraining experiments utilize The Pile dataset \cite{gao2020pile} and the SentencePiece tokenizer \cite{kudo2018sentencepiece}. We evaluate the pretrained models on several downstream tasks, including SQuAD~v1.1 \cite{rajpurkar2016squad}, CoLA \cite{warstadt2018neural}, MRPC \cite{dolan2005automatically}, SST-2 \cite{socher2013recursive}, and MNLI-m \cite{williams2017broad}. 
% This expansion is aimed at demonstrating the effectiveness of pre-trained GPT-2 models optimized with Adapprox, as well as evaluating Adapprox's fine-tuning capabilities on various downstream tasks.
Our primary baselines include AdamW \cite{loshchilov2018decoupled}, Adafactor \cite{shazeer2018adafactor}, and CAME \cite{luo2023came}. We have implemented our optimization algorithm using the PyTorch framework \cite{paszke2019pytorch}. Additionally, the pretraining of GPT-2 is conducted utilizing the Megatron-LM framework \cite{shoeybi2019megatron} and eight NVIDIA Tesla V-100 GPUs. 
% ensuring consistency by conducting all tests within the same computational environment.

% For GPT-2 pertaining, we consistently set the parameters $\beta_1$ and $\beta_2$ to 0.9 and 0.999, respectively, and apply a weight decay rate of 0.1 across all algorithms under comparison.
% Other parameters for Adapprox are set as follows: $a=200$, $b=-10$, $c=-2.5$, $d=-9$, $\epsilon = 1 \times 10^{-8}$, $d = 1$, $k_\textit{init}=1$, $k_\textit{max}=0.25\min\{m, n\}$, $l=5$, $p=5$, $\epsilon_\textit{thresh}=0.01$, and $\Delta s = 10$. For Adafactor and CAME, we retain the remaining parameters according to their default implementations.
For GPT-2 pretraining, we set $\beta_1$ and $\beta_2$ at 0.9 and 0.999, respectively, and maintain a consistent weight decay rate of 0.1 for all compared algorithms. Adapprox's additional parameters are specified as follows: $\epsilon = 1 \times 10^{-8}$, $d = 1$, $k_\textit{init}=1$, $k_\textit{max}=0.25\min\{m, n\}$, $l=5$, $p=5$, $\xi_\textit{thresh}=0.01$, $\Delta s = 10$, and $\eta=200$, $\omega=-10$, $\phi=-2.5$, $\tau=-9$. For Adafactor and CAME, the other parameters are set to their respective default values. We adopt a linear warmup strategy followed by a cosine-style learning rate decay, both integrated within the Megatron-LM framework. To guarantee fair comparisons, all evaluated optimizers use uniform training parameters for each model, selected through empirical testing and established best practices. Specifically, the batch size, number of training iterations, number of warmup iterations, peak learning rate, and minimum learning rate are set as follows: for GPT-2 117M, they are 128, 100K, 1K, \(3\times 10^{-4}\), and \(5\times 10^{-5}\); and for GPT-2 345M, 128, 100K, 1K, \(3\times 10^{-4}\), and \(3\times 10^{-5}\), respectively. 

For downstream tasks, we fine-tune GPT-2 models pretrained with each evaluated optimizer for three epochs, adjusting learning rates individually per task. 
Besides, cosine-similarity guidance is not employed in the fine-tuning process of Adapprox.
%to prioritize precise adjustments based on the pretrained model.

\begin{table}[t]
\caption{Model configurations.}
\label{tab:config}
\vskip 0.15in
\begin{center}
\begin{small}
\begin{sc}
\begin{tabular}{ccccc}
\toprule
\multirow{2}{*}{\textsc{Size}} & \multirow{2}{*}{\textsc{Layers}} & \multirow{2}{*}{\textsc{Hidden}} & \multirow{2}{*}{\textsc{Heads}} & Sequence\\
&&&&Length\\
\midrule
117M    & 12 & 768 & 12 & 1024\\
345M    & 24 & 1024 & 16& 1024\\
%1.3B    & 32 & 1792 & 16& 1024 \\
\bottomrule
\end{tabular}
\end{sc}
\end{small}
\end{center}
\vskip -0.1in
\end{table}
\subsection{Memory Usage Comparison}
\begin{figure*}[t]
    \centering
    \subfigure[117M, validation loss vs. iteration.]{
    \includegraphics[width=0.35\textwidth]{./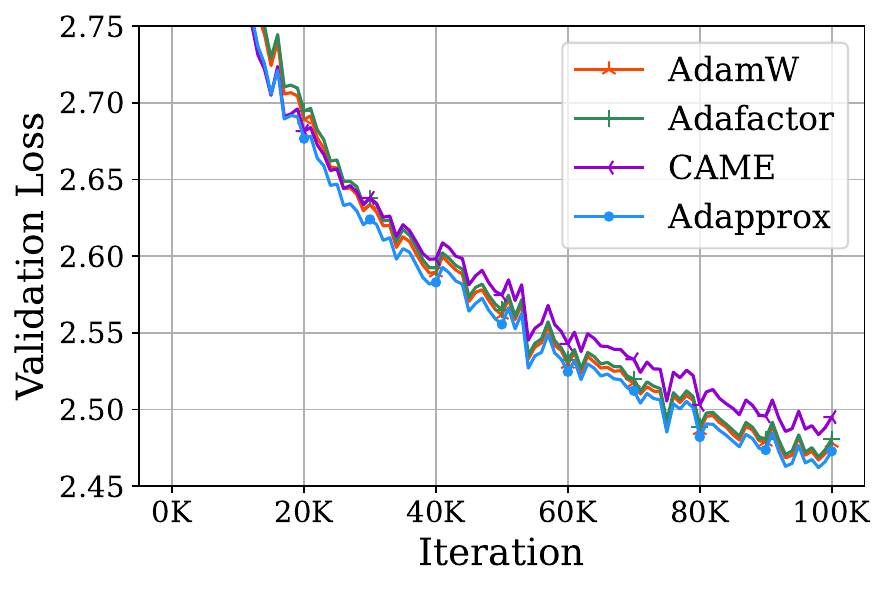} \label{fig:loss-sub1}
    }\hspace{0.1\textwidth}
    \subfigure[345M, validation loss vs. iteration.]{
    \includegraphics[width=0.35\textwidth]{./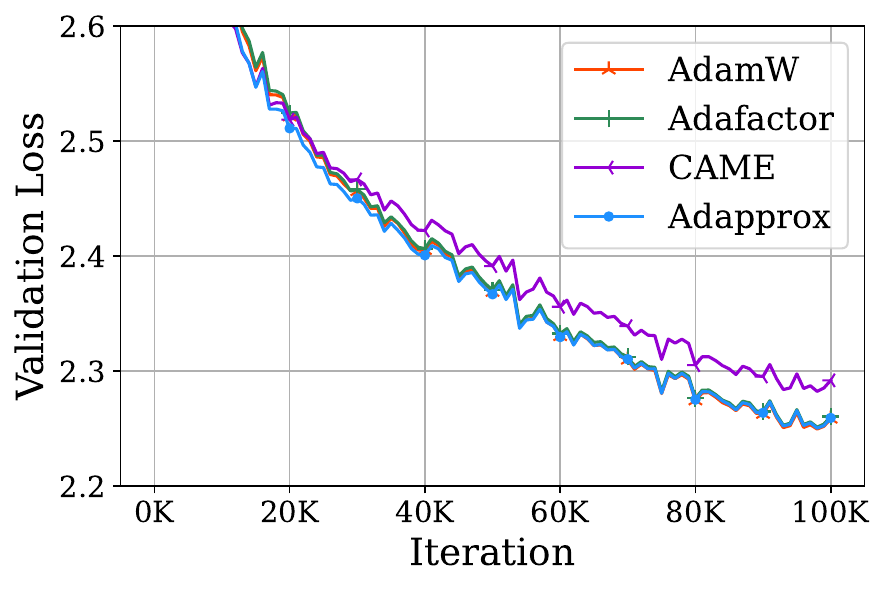}\label{fig:loss-sub2}
    }
    % \subfigure[1.3B, validation loss vs. iteration.]{
    % \includegraphics[width=0.3\textwidth]{icml2024/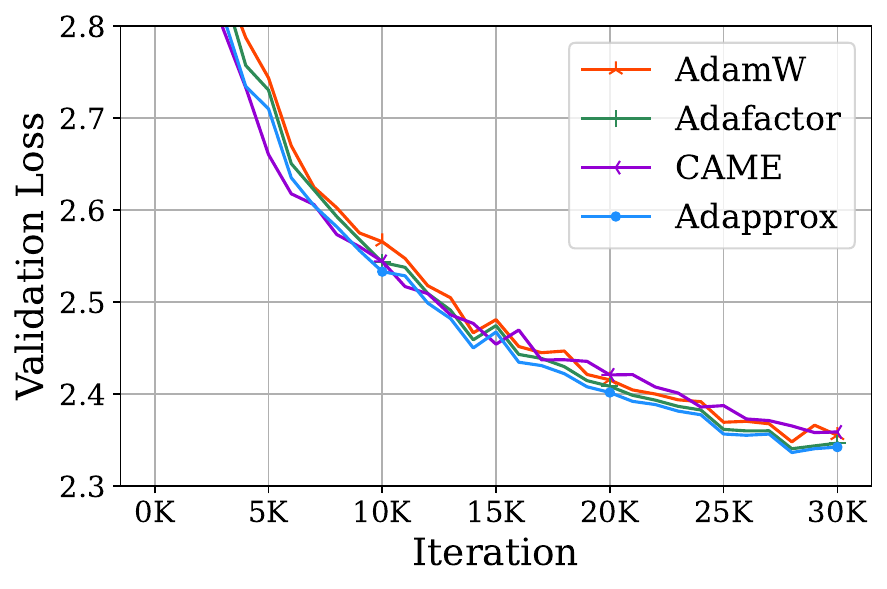}\label{fig:loss-sub3}
    % } 
    \\
    \subfigure[117M, perplexity vs. iteration.]{
    \includegraphics[width=0.35\textwidth]{./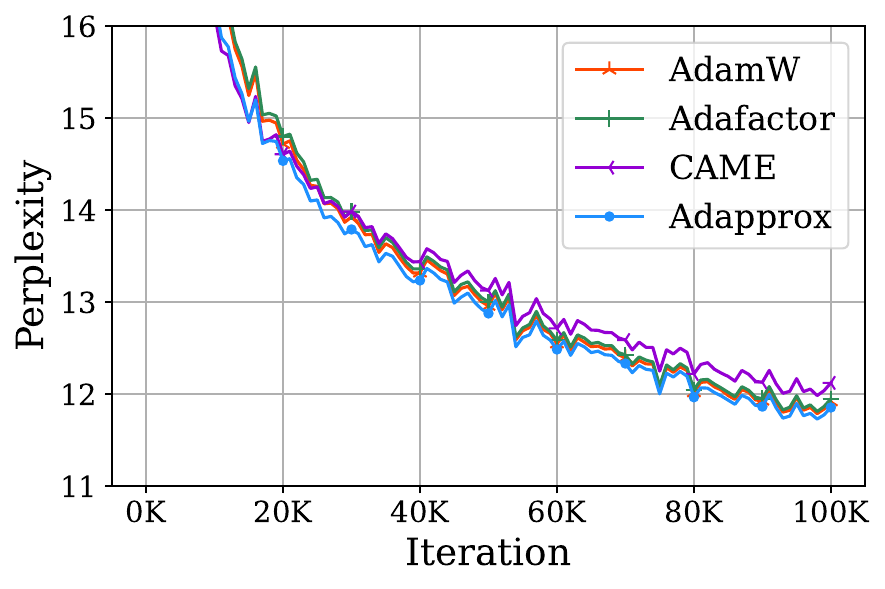} \label{fig:loss-sub4}
    }\hspace{0.1\textwidth}
    \subfigure[345M, perplexity vs. iteration.]{
    \includegraphics[width=0.35\textwidth]{./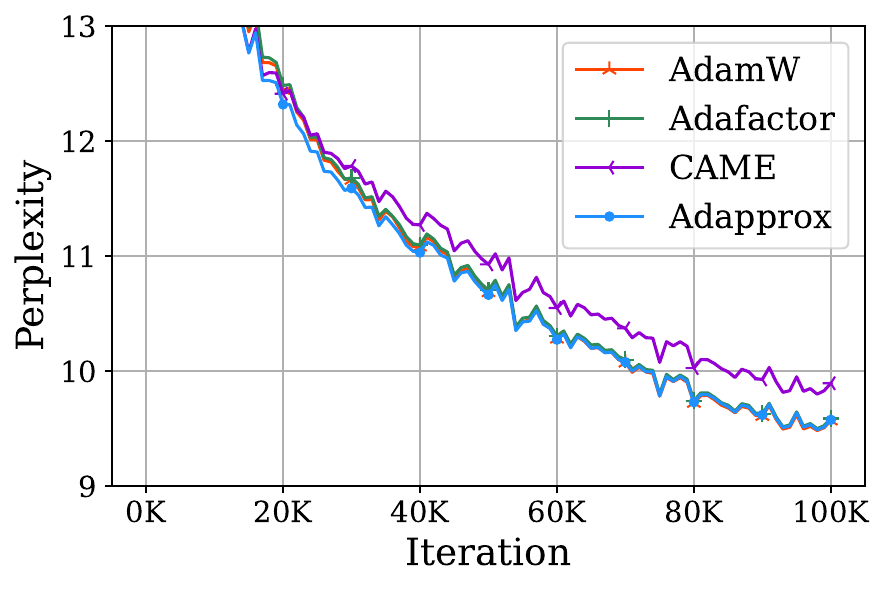}\label{fig:loss-sub5}
    }
    % \subfigure[1.3B, perplexity vs. iteration.]{
    % \includegraphics[width=0.3\textwidth]{icml2024/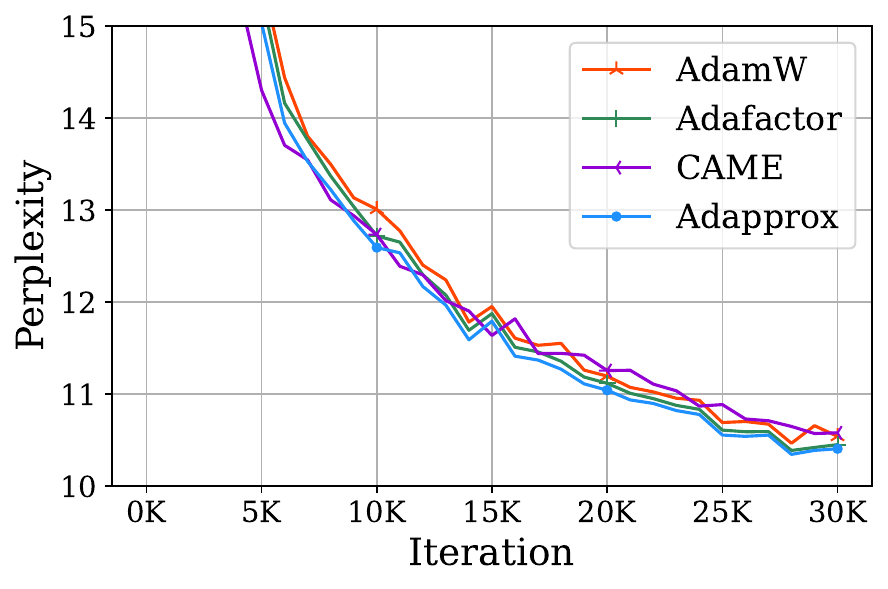}\label{fig:loss-sub6}
    % }
    \caption{Comparative analysis of Adapprox against AdamW, Adafactor, and CAME on pretraining GPT-2 models.}
    \label{fig:loss}
\end{figure*}

\begin{table}[t]
\caption{Quantitative memory usage (MB) comparison.}
\label{tab:mem}
\vskip 0.15in
\begin{center}
\begin{small}
\begin{tabular}{cccc}
\toprule
$\beta_1$ & \textsc{Method} & \textsc{GPT-2 117M} & \textsc{GPT-2 345M} \\
\midrule
\multirow{5}{*}{0.9}    & AdamW & 949.7 (100.0\%)  & 2707.5 (100.0\%) \\
                        & Adafactor & 476.1 (50.1\%) & 1356.7 (50.1\%) \\
                        & CAME & 476.8 (50.2\%) & 1358.4 (50.2\%)\\
                        & Adapprox ($k_\textit{init}$) & 476.1  (50.1\%) &1356.7  (50.1\%) \\
                        & Adapprox ($k_\textit{max}$) & 622.0 (65.5\%) & 1791.1  (65.5\%)\\

\midrule
\multirow{5}{*}{0.0}    & AdamW & 949.7  (100.0\%) &2707.5 (100.0\%)\\
                        & Adafactor & 1.2 (0.1\%) & 2.9 (0.1\%) \\
                        & CAME & $-$  & $-$ \\
                        & Adapprox ($k_\textit{init}$) & 1.2 (0.1\%) & 2.9 (0.1\%)\\
                        & Adapprox ($k_\textit{max}$) &147.2 (15.5\%) & 437.4 (16.2\%) \\ 
\bottomrule
\end{tabular}
\end{small}
\end{center}
\vskip -0.1in
\end{table}

\begin{table*}[!ht]
\caption{Results of fine-tuning performance on downstream tasks.}
\label{tab:down}
\vskip 0.15in
\begin{center}
\begin{small}
\begin{tabularx}{\linewidth}{YYYYYYYY}
\toprule
\multirow{2}{*}{\textsc{Model}}  & \multirow{2}{*}{\textsc{Method}} & SQuAD v1.1 & CoLA & MRPC & SST-2 & MNLI-m &\multirow{2}{*}{\textsc{Average}} \\
& & (F1) & (Acc) & (Acc) & (Acc) & (Acc) &\\
\midrule
\multirow{5}{*}{GPT-2 117M}    & AdamW & 70.35 & 74.69 &\textbf{82.35}&89.33&79.57 & 79.26\\
                        & Adafactor & 72.57 &75.55 &79.41&89.33&80.13 &79.40\\
                        & CAME & 70.60 & 69.31 &64.95&82.22&76.37 & 72.69 \\
                        & Adapprox & \textbf{72.89}  &\textbf{75.83} &80.63&\textbf{90.94}&\textbf{80.15} & \textbf{80.09} \\

\midrule
\multirow{4}{*}{GPT-2 345M}    & AdamW & 75.06 &\textbf{79.39} &82.84&\textbf{91.17}&82.33 & 82.16\\
                        & Adafactor & 76.23   & 78.90 &\textbf{83.33}&91.06 & 82.62 & 82.43\\
                        & CAME & 75.50  & 69.22 &71.81&82.22 & 81.72 & 76.09\\
                        & Adapprox & \textbf{76.68} & 79.00 &\textbf{83.33}&\textbf{91.17}&\textbf{82.66} & \textbf{82.57}\\
\bottomrule
\end{tabularx}
\end{small}
\end{center}
\vskip -0.1in
\end{table*}

We assess the memory footprint associated with the state of each optimizer. Adapprox utilizes an adaptive low-rank approximation, and we report its memory usage based on predetermined $k_{\textit{init}}$ and $k_{\textit{max}}$ values. The actual memory usage falls between these boundaries. The results are listed in Table \ref{tab:mem}. While Adapprox does increase memory usage compared to Adafactor and CAME, it allows for a flexible trade-off between memory efficiency and accuracy through the adjustment of $k_{\textit{init}}$ and $k_{\textit{max}}$. In comparison to AdamW and with $\beta_1 = 0.9$, Adapprox achieves memory reductions of 34.5\% to 49.9\% for GPT-2 117M and 33.8\% to 49.9\% for GPT-2 345M. With $\beta_1 = 0$, CAME becomes non-viable, while AdamW still allocates memory for the first moment. In contrast, both Adafactor and Adapprox completely forego first moment memory allocation. In such scenarios, Adapprox yields significant memory savings: 84.5\% to 99.9\% for GPT-2 117M and 83.8\% to 99.9\% for GPT-2 345M.

\subsection{GPT-2 Training}
Figure \ref{fig:loss} presents the empirical comparison of Adapprox with AdamW, Adafactor, and CAME in the pretraining of GPT-2 models. The top row shows the validation loss across the 117M and 345M GPT-2 models, respectively, from left to right. The bottom row displays the validation perplexity for these models, arranged in the same sequence.
% Our analysis suggests that Adapprox achieves better validation loss and perplexity compared to Adafactor and CAME. 
Relative to AdamW, Adapprox generally exhibits better validation loss and perplexity on the GPT-2 117M model, while delivering comparable results on the GPT-2 345M. 
Adafactor consistently underperforms compared to Adapprox, although the performance difference on the GPT-2 345M is relatively modest. 
While CAME initially shows lower validation loss and perplexity, it tends to converge to suboptimal outcomes over time. These results indicate that Adapprox effectively balances accuracy with memory usage and potentially offers faster convergence and superior performance compared to Adafactor and CAME.

\subsection{Downstream Tasks}
%We evaluate the downstream task performance of GPT-2 110M and 345M models, ensuring that each model is both pretrained and fine-tuned with its corresponding optimizer. Results for the GPT-2 1.3B are not presented here due to space constraints.
We evaluate the downstream task performance of GPT-2 110M and 345M models, each pretrained and fine-tuned with its corresponding optimizer. The empirical results are presented in Table~\ref{tab:down}.
Our experimental findings underscore the superiority of Adapprox, as evidenced by better performance of GPT-2 models trained and fine-tuned using it, compared to those using Adafactor and CAME. Besides, Adapprox not only achieves performance comparable to AdamW but also surpasses it in certain tasks.
Additionally, we observe that Adapprox exhibits stable performance across various learning rates, a conclusion drawn from employing grid search to determine optimal learning rates for different downstream tasks. 
In contrast, CAME shows sensitivity to learning rate selection, frequently underperforming relative to other optimizers in these sceniors. 
For detailed evidence, refer to Appendix \ref{app:lr}.

\section{Discussion}
While Adapprox and Adafactor achieve significant memory savings through the low-rank approximation of the second moment and by omitting the first moment (seen in Table \ref{tab:mem}), our experiments demonstrate that the absence of the first moment has a notable impact on convergence speed and overall performance (see Appendix \ref{app:beta1}). Consequently, we recommend retaining the first moment unless memory constraints are extremely prohibitive. Looking towards future research, investigating techniques to compress the first moment could be a promising direction. Furthermore, it is crucial to note that our approach is compatible with other memory optimization techniques such as quantization and recomputation. This compatibility opens the door for potential synergistic enhancements in optimizer design.

\section{Conclusion}
In this paper, we introduce Adapprox, an innovative optimizer designed to address the memory consumption challenges in training large-scale models. By applying randomized low-rank matrix approximation to the second moment of Adam, Adapprox achieves significant memory reduction with minimal impact on model accuracy. This solution is further optimized with an adaptive rank selection mechanism and an optional cosine similarity guidance strategy, enhancing both stability and convergence speed. Our empirical evaluations, encompassing GPT-2 117M and 345M models and their downstream tasks, demonstrate Adapprox's efficacy. It delivers considerable memory savings, ranging from 33.8\% to 49.9\% over AdamW with the first moment intact, and between 83.8\% to 99.9\% when the first moment is excluded. Despite a slight trade-off in memory efficiency, Adapprox outperforms competitors like Adafactor and CAME across several key metrics, including validation loss and perplexity in pretraining, and F1 score and accuracy in downstream tasks. These results establish Adapprox as a balanced approach for memory-efficient training, effectively harmonizing efficiency with minimal accuracy trade-offs.

\section*{Impact Statements}
This paper presents work whose goal is to advance the field of Machine Learning. There are many potential societal consequences of our work, none which we feel must be specifically highlighted here.

\bibliography{example_paper}
\bibliographystyle{icml2024}

%%%%%%%%%%%%%%%%%%%%%%%%%%%%%%%%%%%%%%%%%%%%%%%%%%%%%%%%%%%%%%%%%%%%%%%%%%%%%%%
%%%%%%%%%%%%%%%%%%%%%%%%%%%%%%%%%%%%%%%%%%%%%%%%%%%%%%%%%%%%%%%%%%%%%%%%%%%%%%%
% APPENDIX
%%%%%%%%%%%%%%%%%%%%%%%%%%%%%%%%%%%%%%%%%%%%%%%%%%%%%%%%%%%%%%%%%%%%%%%%%%%%%%%
%%%%%%%%%%%%%%%%%%%%%%%%%%%%%%%%%%%%%%%%%%%%%%%%%%%%%%%%%%%%%%%%%%%%%%%%%%%%%%%
\newpage
\appendix
\onecolumn
\section{Ablation Experiment on Clipping Mechanism}\label{app:clip}
\begin{figure}[ht]
\vskip 0.2in
\begin{center}
\centerline{\includegraphics[width=0.5\columnwidth]{./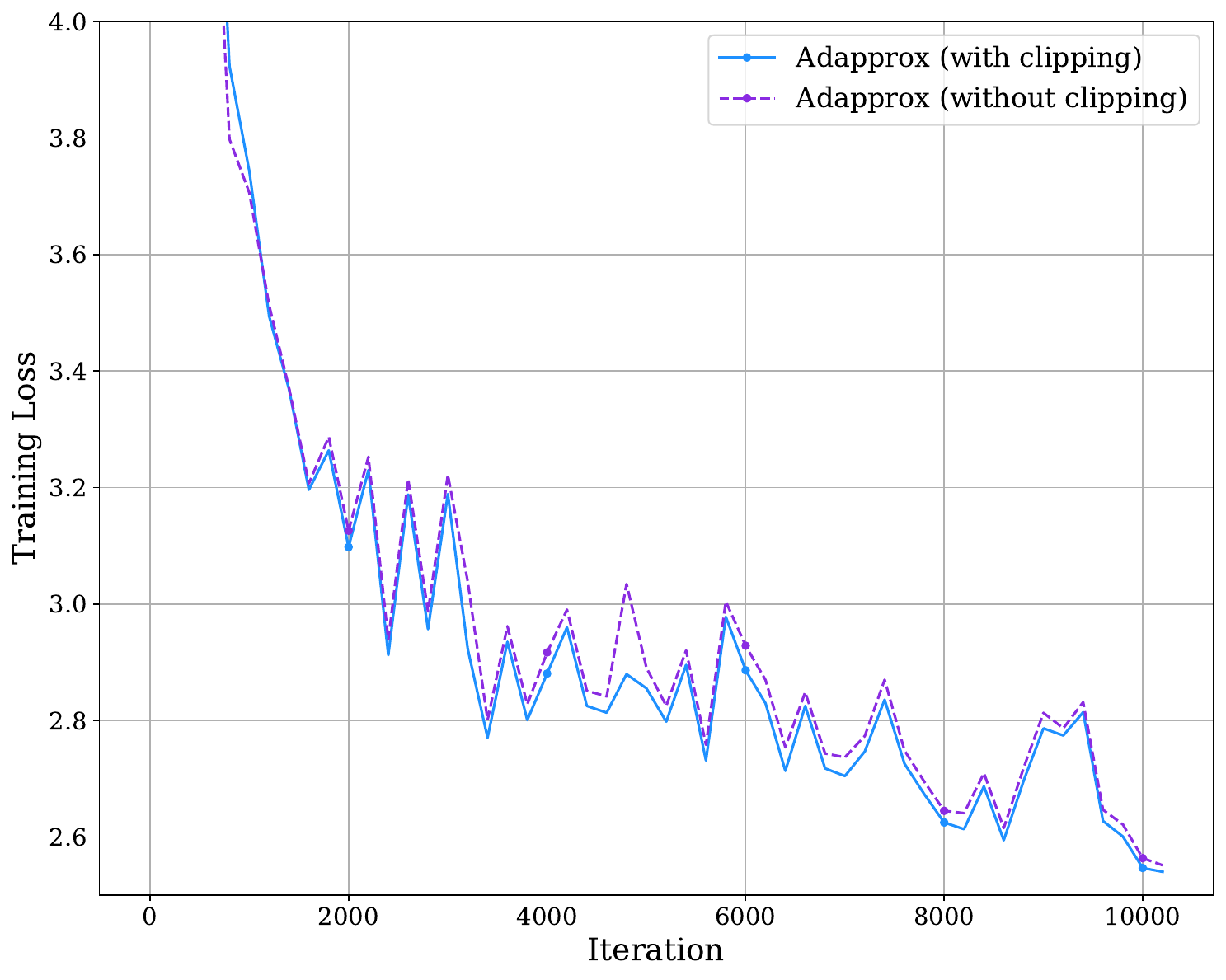}}
\caption{Comparative analysis of training loss for the GPT-2 345M model utilizing Adapprox with and without the clipping mechanism.}
\label{fig:clip}
\end{center}
\vskip -0.2in
\end{figure}
Figure \ref{fig:clip} provides a comparative analysis of training loss convergence for the GPT-2 345M model utilizing Adapprox with and without the implementation of the clipping mechanism. The results indicate that the inclusion of a clipping mechanism enhances Adapprox's performance, resulting in lower training losses at equivalent iterations.
\section{Learning Rate Sensitivity Analysis}\label{app:lr}
\begin{figure}[ht]
\vskip 0.2in
\begin{center}
\centerline{\includegraphics[width=0.6\columnwidth]{./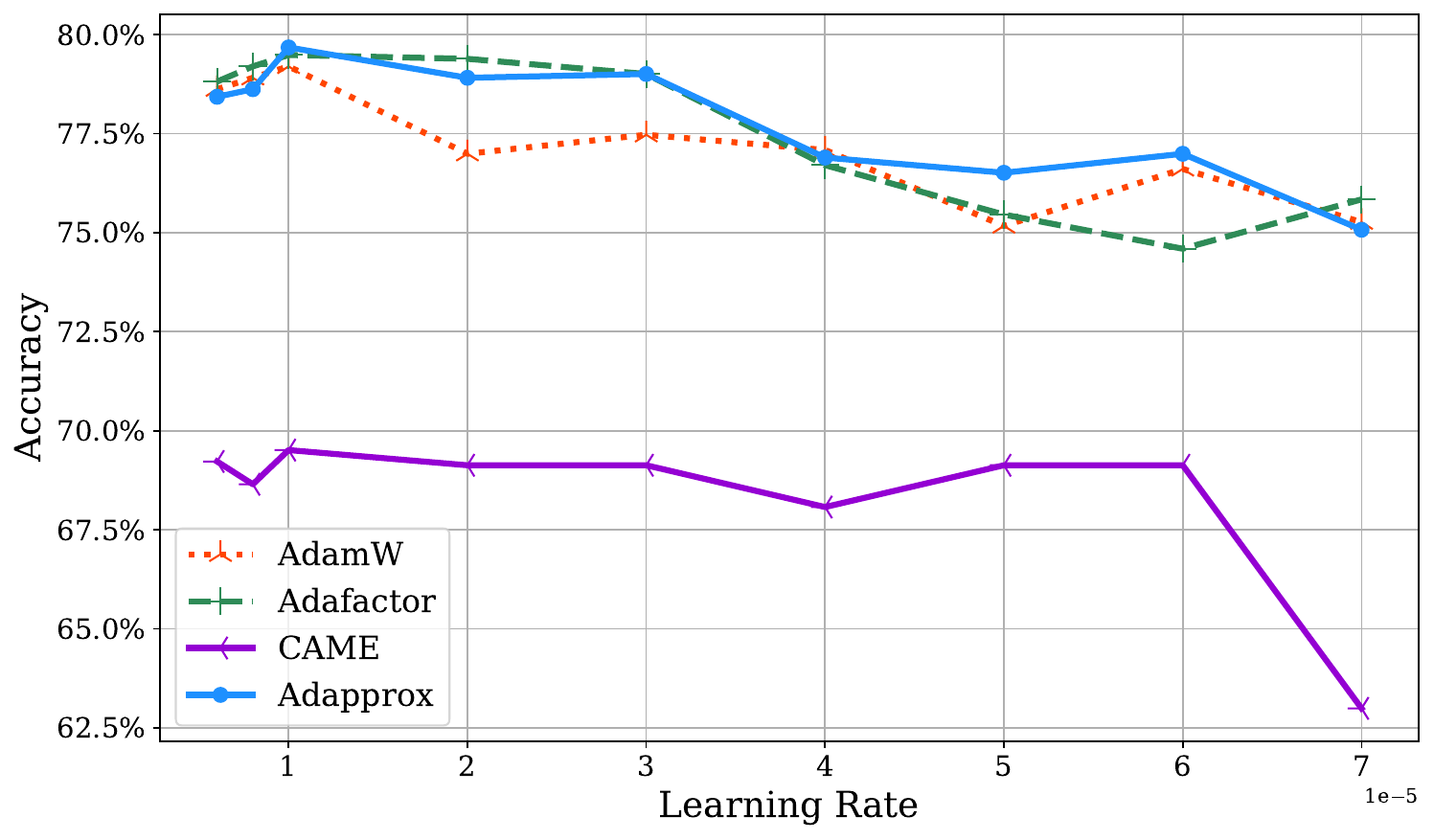}}
\caption{Accuracy of the AdamW-pretrained GPT-2 345M model fine-tuned with compared optimizers on the CoLA task across different learning rates.}
\label{fig:lr-analysis}
\end{center}
\vskip -0.2in
\end{figure}
In our experiments, Adapprox's performance demonstrates stability across a range of learning rates. This is illustrated in Figure \ref{fig:lr-analysis}, which compares the accuracy of various optimizers in fine-tuning the AdamW-pretrained GPT-2 345M model on the CoLA task under different learning rate conditions. Notably, Adapprox shows minimal fluctuations, underlining its enhanced stability and lower sensitivity to changes in learning rates.
% \Furthermore, CAME, in contrast, consistently yields suboptimal results compared to its counterparts. 

\section{First Moment Efficacy Analysis}\label{app:beta1}
\begin{figure}[ht]
\vskip 0.2in
\begin{center}
\centerline{\includegraphics[width=0.5\columnwidth]{./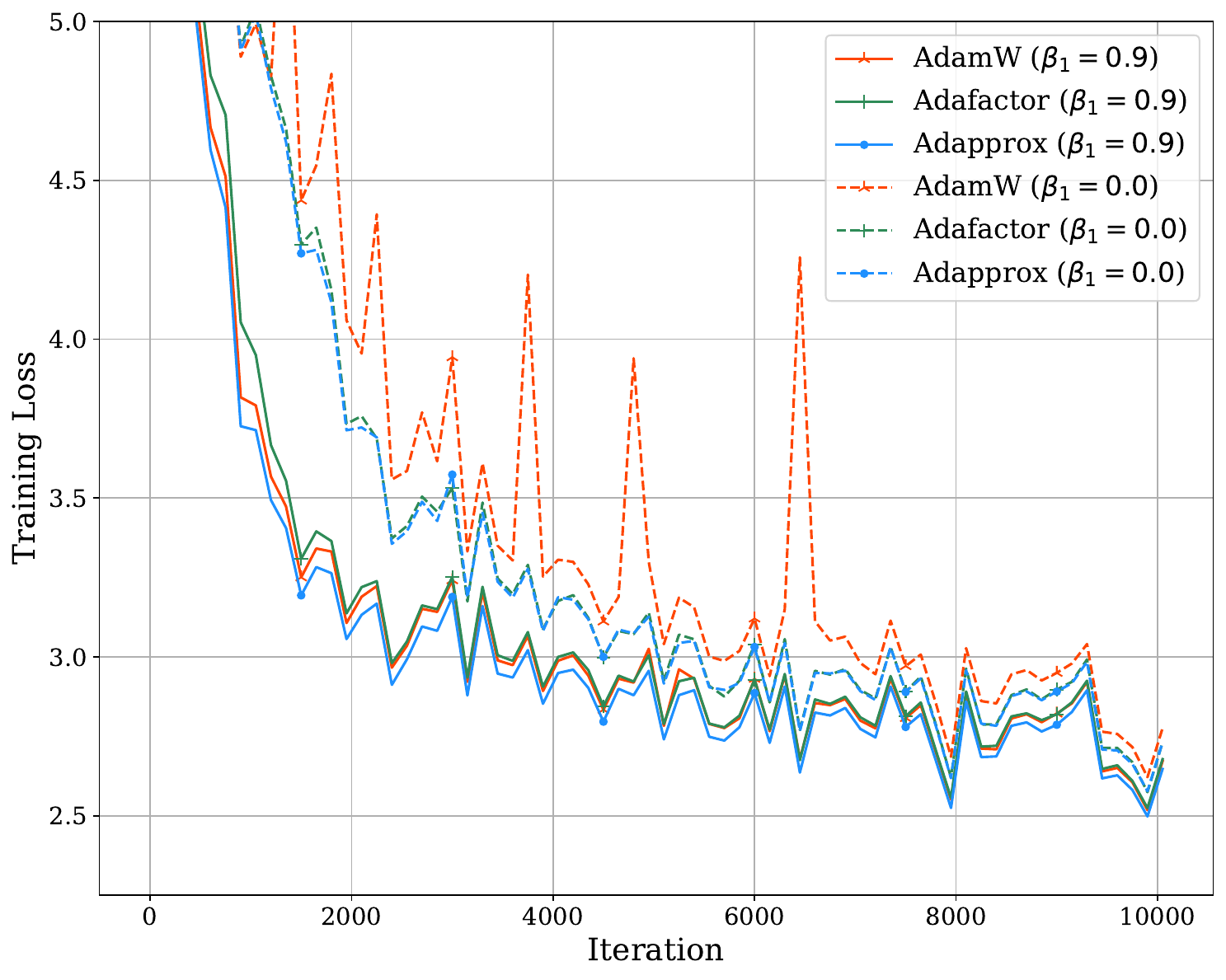}}
\caption{Training loss vs. iteration for AdamW, Adafactor, and Adapprox optimizers, comparing scenarios with and without the first moment. }
\label{fig:beta1-analysis}
\end{center}
\vskip -0.2in
\end{figure}
Figure \ref{fig:beta1-analysis} demonstrates that incorporating the first moment significantly accelerates the convergence process, evidenced by achieving lower training losses at the same iteration, for each optimizer examined, including AdamW, Adafactor, and Adapprox. CAME is omitted from this analysis due to its incompatibility with $\beta_1 = 0$. Furthermore, while AdamW exhibits instability without the first moment, Adafactor and Adapprox mitigate this through the use of a clipping mechanism, effectively reducing large, unexpected updates and enhancing stability.
% This is evidenced by a marked decrease in training loss across iterations when the first moment is active, as opposed to when it is not included.

%%%%%%%%%%%%%%%%%%%%%%%%%%%%%%%%%%%%%%%%%%%%%%%%%%%%%%%%%%%%%%%%%%%%%%%%%%%%%%%
%%%%%%%%%%%%%%%%%%%%%%%%%%%%%%%%%%%%%%%%%%%%%%%%%%%%%%%%%%%%%%%%%%%%%%%%%%%%%%%

\end{document}